\newcolumntype{L}{>{$}l<{$}} 
\newtheorem{definition}{Definition}
\newtheorem{theorem}{Theorem}
\newtheorem{lemma}{Lemma}
\newtheorem{proposition}{Proposition}
\newcommand{\remove}[1]{}
\newcommand{\savefootnote}[2]{\footnote{\label{#1}#2}}
\newcommand{\repeatfootnote}[1]{\textsuperscript{\ref{#1}}}
\title{\LARGE \bf
Provable Probabilistic Safety and Feasibility-Assured Control for Autonomous Vehicles using Exponential Control Barrier Functions
}
\author{Spencer Van Koevering$^{1}$, Yiwei Lyu$^{1}$, Wenhao Luo$^{2}$, John Dolan$^{1}$
	\thanks{$^{1}$Spencer Van Koevering, Yiwei Lyu, and John Dolan are with Carnegie Mellon University,
		Pittsburgh PA 15213, USA. {\tt\small\{yiweilyu,jdolan\}@andrew.cmu.edu, vankoesd@whitman.edu}}%
	\thanks{$^{2}$Wenhao Luo is with the Department of Computer Science, University of North Carolina at Charlotte, Charlotte NC 28223, USA. Email: {\tt \small wenhao.luo@uncc.edu}}%
}
\begin{document}

\maketitle
\thispagestyle{empty}
\pagestyle{empty}

\begin{abstract}

With the increasing need for safe control in the domain of autonomous driving, model-based safety-critical control approaches are widely used, especially Control Barrier Function (CBF)-based approaches. Among them, Exponential CBF (eCBF) is particularly popular due to its realistic applicability to high-relative-degree systems. However, for most of the optimization-based controllers utilizing CBF-based constraints, solution feasibility is a common issue arising from potential conflict among different constraints. Moreover, how to incorporate uncertainty into the eCBF-based constraints in high-relative-degree systems to account for safety remains  an open challenge. In this paper, we present a novel approach to extend an eCBF-based safe critical controller to a probabilistic setting to handle potential motion uncertainty from system dynamics. More importantly, we leverage an optimization-based technique to provide a solution feasibility guarantee in run time, while ensuring probabilistic safety. Lane changing and intersection handling are demonstrated as two use cases, and experiment results are provided to show the effectiveness of the proposed approach.

\end{abstract}

\section{INTRODUCTION}

Human-robot coexistence in the future brings a critical need to explore the way to ensure robot safe control. 
Classical control methods including proportional–integral–derivative (PID) control are widely used, but are not good at enforcing system constraints. 
Consider the adaptive cruise control (ACC) problem as an example. The system attempts to follow a desired velocity while maintaining a critical safe distance from the leading vehicle at the same time. Classic control would not be the best choice due to the additional safety constraint \cite{PIDstuff, ACC}.

There are multiple ways to approach safety in control systems. At the planner level, safety can be enforced using hard constraints or encouraged via optimality, while low-level controllers can use a reactive safety approach\cite{wang2020infusing}. Safe set methods provide constraints for low-level controllers that guarantee safety, regardless of the planning that determines high-level control. Human motion patterns are often complicated and cannot easily be modelled\cite{safetyHRI}. This makes safe set methods, including CBFs, very appealing methods for safe control in the presence of humans, as they can be independent of the intention of other agents in the system. In addition, CBFs are relatively light and scalable when compared to other safe set-based methods, like reachability analysis \cite{li2021comparison}. 

As a modern control approach, Control Barrier Functions (CBF) \cite{ames2019control, Ames_2017} offers a more admissive control space based on the idea of set forward invariance, which is formally provable. Therefore, it is seeing increased usage in safe control \cite{lyu2021probabilistic, 7524935, wang2017safe, lyu2022adaptive}. When it comes to complex systems, Exponential CBF (eCBF) demonstrates its usefulness in high-relative-degree safety-critical control\cite{7524935, ames2019control}. Due to this, and its resilience in situations with other unpredictable agents, it is a common choice for safety-critical controllers for automated vehicles.

In this paper, we propose a novel probabilistic eCBF-based control framework for higher-relative-degree systems. Our \textbf{main contributions} in this paper are: 1) a novel extension of exponential Control Barrier Function-based constraints to a probabilistic setting for stochastic system dynamics while preserving provably correct safety guarantees; 2) a time-varying CBF parameter selection technique for a point-wise feasibility guarantee, enhancing the general applicability to complex systems with higher relative degree; 3) Two case studies as application examples in the domain of autonomous driving, and the use of numerical simulation to demonstrate the validity and effectiveness of the proposed approach.

\section{Related Work}
Safe control is an area of automated control in which there are states that are critical to avoid, unsafe states. Common approaches to safety-critical control include safety indexes, artificial potential fields (APFs), and control barrier functions. One safe set approach is the safety index. A safety index is a function of the state of a system, $\phi: X\to \mathbb{R}$ such that $\phi(\textbf{x}) \leq 0$ if and only if $\textbf{x}$ is a safe state \cite{safetyHRI, liu2017designing}. Furthermore, $\dot{\phi}(\textbf{x}) \leq 0$ if $\textbf{x}$ is safe\cite{safetyHRI}. However, this is a stronger condition than needed, and it is safe for $\phi(x)$ to increase as long as it never becomes positive\cite{ames2019control}. An APF works by having a goal region that asserts an attractive potential, while having obstacles assert a repulsive potential\cite{singletary2020comparative}. 

CBFs are similar to a general safety index method but allow for a larger set of safe inputs by only enforcing the safety level set\cite{ames2019control}. APFs can be formulated as CBFs and can be outperformed by CBFs in terms of safety\cite{singletary2020comparative}. CBFs are a highly general and flexible method for safe control that include several variants like the eCBF, which allows for the same guarantees as a CBF in higher-relative-degree systems. CBFs are great for dealing with complicated systems, including high-dimensional and high-degree problems\cite{safetyHRI, 8619142, li2021comparison}. CBF validity and applicability to practical higher-relative-degree problems like adaptive cruise control and lane changing are well known \cite{ames2019control, he2021rulebased, Ames_2017, 7040372}.

There are many existing works addressing safety-critical control in deterministic environments \cite{he2021rulebased, Ho-2020-126658}. However, in a more realistic setting, how to deal with the existence of uncertainty remains a problem. There are some efforts on incorporating uncertainty into CBF-based safety constraints formulation \cite{9655206, 9196757, luo2020multi}. For high-relative-degree systems, learning-based methods are often used to model the uncertainty. Instead of incorporating this information into CBF-based safety constraints, usually it is used to obtain a learnt distribution over the system dynamics. So, generally, they allow for learning the dynamics model but do not provide a provable safety guarantee. Another approach is that of the robust CBF. Robust CBFs are a method by which safety can be guaranteed under limited non-linearity or bounded uncertainty\cite{buch2021robust, 7526114}. This a provable guarantee like what we offer, excepting that the robust guarantee is not probablistic. This is possible because the uncertainty must be bounded in order to use a robust CBF \cite{buch2021robust, 7526114}, whereas we consider unbounded uncertainty. 

Moreover, for optimization-based safety-critical controllers that use CBF-based constraints, solution feasibility is a common problem.  \cite{lyu2021probabilistic} mentions that the solution feasibility can be guaranteed by assuming that, in the worst case, making all robots decelerate to zero velocity immediately at the next time step can always prevent collision, and therefore the feasible solution space will always be non-empty. However, a more principled scheme with explicit theoretical grounding is desired to automatically decide whether the vehicle needs to perform full braking before it is too late. 

We offer a provable probabilistic safety and feasibility guarantee for high-relative-degree systems under unbounded uncertainty. For example, safety-critical control design using control barrier functions when applied to an autonomous lane change model requires higher-degree barriers for constraints on position, as it models steering and acceleration as input. The extension of the framework for probabilistic barriers and feasibility guarantees to higher-degree systems would show the effectiveness of this strategy in more realistic control systems and demonstrate the extensibility of this approach to other high-degree systems.

\section{METHOD}
\subsection{Background}
A general affine control system takes the form $\dot{\textbf{x}} = f(\textbf{x})+g(\textbf{x})u $. Given an affine control system of this form where $f$ and $g$ are locally Lipschitz and $\textbf{x} \in D \subset \mathbb{R}^n$ is the state and $u \in U \subset \mathbb{R}^m$ is the set of admissible inputs \cite{ames2019control}. Let us define a function $h(\textbf{x}): D \subset \mathbb{R}^n \to \mathbb{R}$, where the set $C = \{\textbf{x} \in D\subset\mathbb{R}^n: h(\textbf{x}) \geq 0\}$ is called the safe set\cite{ames2019control}. This allows for the barrier function to be defined \cite{ames2019control}:
\begin{definition}
Let $C \subset D \subset \mathbb{R}^n$ be the superlevel set of a continuously differentiable function $h: D\to \mathbb{R}$, then $h$ is a control barrier function if there exists an extended class $\mathcal{K}_\inf$ function $\alpha$ such that for the affine control system 
$
\sup_{u\in U}\left[L_fh(\textbf{x}) + L_g(h(\textbf{x})u) \geq -\alpha(h(\textbf{x}))\right]
$
for all $\textbf{x} \in D$.
\end{definition}
This allows for the definition of the set of all inputs that render the set safe as stated in \cite{ames2019control} to be
$
	K_{\text{cbf}}(\textbf{x}) = \{u \in U: L_fh(\textbf{x}) + L_gh(\textbf{x})u + \alpha(h(\textbf{x})) \geq 0\}
$.

Since the definition of Control Barrier Functions only includes a first derivative, in order to use a higher-relative-degree barrier, a stronger condition will be needed. The eCBF, as proposed in \cite{ames2019control}, permits Control Barrier Functions of higher relative degree than one.

Let 
\begin{align}
	\eta_b = \begin{bmatrix}h(\textbf{x})\\ \dot{h}(\textbf{x}) \\ \vdots\\ h^{(r-1)}(\textbf{x})\end{bmatrix}
	F =
        \begin{bmatrix}
        0 & I_{r-1 \times r-1}\\
        \vdots\\
        0 & \dots0 \\
        \end{bmatrix}
	G = \begin{bmatrix}\textbf{0}_{1\times r} \\ 1\end{bmatrix}
\end{align}
where $r$ is the relative degree, $F$ is $r\times r$, $\eta_b$ is $1\times r$ and $G$ is $1\times r$.
Given this, the definition of an eCBF given in \cite{ames2019control} is:
\begin{definition}
	Given a set $C \subset D \subset \mathbb{R}^n$ defined as the superlevel set of a $r$-times continuously differentiable function $h: D \to \mathbb{R}$, then $h$ is an exponential Control Barrier Function if there exists a row vector $K_\alpha \in \mathbb{R}^r$ such that for the affine control system
	\begin{align}
		\sup_{u \in U} \left[ h^{(r)}(\textbf{x})\right] \geq -K_\alpha\eta_b(\textbf{x}) \label{ecbf}
	\end{align}
\end{definition}
Similar to CBFs, where $\alpha > 0$ is required to guarantee safety, there are some constraints on $K_\alpha$ required for (\ref{ecbf}) to guarantee safety. Let$
	v_0 = h(\textbf{x}),\;\;
	v_i = \dot{v}_{i-1} + p_iv_{i-1}$
where $p_i$ are the eigenvalues of $F-GK_\alpha$, then each eigenvalue $p_i$ where $1\leq i \leq r$ must satisfy $p_i>0 ,p_i \geq -\frac{\dot v_{i-1}(\textbf{x}_0, u_0)}{v_{i-1}(\textbf{x}_0, u_0)}$ \cite{ames2019control}, where $\textbf{x}_0$ and $u_0$ are the state and input at epoch $0$.
\begin{lemma}
The constraints that must be satisfied for safety under an eCBF are $\sup_{u \in U} \left[ h^{(r)}(\textbf{x})\right] \geq -K_\alpha\eta_b(\textbf{x})$, $p_i>0 ,p_i \geq -\frac{\dot v_{i-1}(\textbf{x}_0, u_0)}{v_{i-1}(\textbf{x}_0, u_0)}$ \cite{ames2019control}. 
\end{lemma}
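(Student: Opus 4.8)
The plan is to reconstruct the exponential-stability argument of \cite{ames2019control} through feedback linearization followed by an iterated application of the comparison lemma, thereby showing that these two families of conditions are precisely what render the safe set $C$ forward invariant. First I would use the relative-degree-$r$ structure of $h$ to write its top derivative in input-affine form, $h^{(r)}(\textbf{x}) = L_f^r h(\textbf{x}) + L_g L_f^{r-1} h(\textbf{x})\, u$, and introduce the virtual input $\mu := h^{(r)}(\textbf{x})$. With this choice the stacked vector $\eta_b$ obeys the linear dynamics $\dot{\eta}_b = F\eta_b + G\mu$ and the barrier value is recovered as $h(\textbf{x}) = C\eta_b$ with $C = [\,1\ 0\ \cdots\ 0\,]$. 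Selecting the feedback $\mu = -K_\alpha\eta_b$ closes the loop to $\dot{\eta}_b = (F - GK_\alpha)\eta_b$, so that $h(\textbf{x}(t)) = C\,e^{(F - GK_\alpha)t}\eta_b(\textbf{x}_0)$; the eCBF inequality $\sup_{u\in U} h^{(r)}(\textbf{x}) \geq -K_\alpha\eta_b(\textbf{x})$ is then exactly the requirement that an admissible $u$ exists realizing, or dominating, this stabilizing feedback.

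The second step is to connect the recursive variables $v_0 = h$, $v_i = \dot v_{i-1} + p_i v_{i-1}$ to the closed-loop spectrum. Choosing $K_\alpha$ by pole placement so that $-p_1,\ldots,-p_r$ are the eigenvalues of $F - GK_\alpha$, the scalar operator acting on $h$ factors as a product of first-order operators, $\bigl(\tfrac{d}{dt}+p_r\bigr)\cdots\bigl(\tfrac{d}{dt}+p_1\bigr)h$, which is exactly the composition encoded by the $v_i$. Telescoping this product yields the key identity $\dot v_{r-1} + p_r v_{r-1} = h^{(r)}(\textbf{x}) + K_\alpha\eta_b(\textbf{x})$, so that the first constraint $h^{(r)} \geq -K_\alpha\eta_b$ becomes equivalent to $\dot v_{r-1} \geq -p_r v_{r-1}$.

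I would then finish by a downward induction driven by the comparison lemma. From $\dot v_{r-1} \geq -p_r v_{r-1}$ with $p_r > 0$, Gronwall's inequality gives $v_{r-1}(t) \geq v_{r-1}(\textbf{x}_0,u_0)\,e^{-p_r t} \geq 0$ whenever $v_{r-1}(\textbf{x}_0,u_0) \geq 0$; rearranging $v_{r-1} = \dot v_{r-2} + p_{r-1} v_{r-2}$ at epoch $0$ shows this initial-condition requirement is exactly $p_{r-1} \geq -\dot v_{r-2}(\textbf{x}_0,u_0)/v_{r-2}(\textbf{x}_0,u_0)$. Non-negativity of $v_{r-1}$ on $[0,\infty)$ then gives $\dot v_{r-2} \geq -p_{r-1} v_{r-2}$, and a further application of the comparison lemma (using $p_{r-1} > 0$ and the corresponding initial condition) propagates non-negativity to $v_{r-2}$. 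Iterating down the chain $v_{r-1}\to v_{r-2}\to\cdots\to v_0$, each rung consuming one positivity constraint $p_i > 0$ together with one initial-value constraint $p_i \geq -\dot v_{i-1}(\textbf{x}_0,u_0)/v_{i-1}(\textbf{x}_0,u_0)$, terminates at $v_0(t) = h(\textbf{x}(t)) \geq 0$ for all $t$, which is the claimed safety guarantee.

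The main obstacle I anticipate is making the telescoping/pole-placement identity rigorous and handling the denominators carefully: the rearranged constraints $p_i \geq -\dot v_{i-1}/v_{i-1}$ presuppose $v_{i-1}(\textbf{x}_0,u_0) > 0$, so the degenerate case $v_{i-1}(\textbf{x}_0,u_0) = 0$ must be read back as the sign condition $v_i(\textbf{x}_0,u_0) \geq 0$ to keep the induction valid. Beyond this, since the statement is a faithful collection of the eCBF conditions already established in \cite{ames2019control}, the proof is largely a matter of assembling these standard pieces rather than developing new machinery.
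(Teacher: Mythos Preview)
Your reconstruction is sound and tracks the argument in \cite{ames2019control} faithfully: feedback-linearize to get $\dot\eta_b = F\eta_b + G\mu$, place poles at $-p_i$, factor the closed-loop operator, and cascade the comparison lemma down the chain $v_{r-1}\to\cdots\to v_0$. The paper itself, however, offers no proof of this lemma at all --- it is stated purely as a summary of conditions taken from \cite{ames2019control}, with the citation serving as the justification. So you have not diverged from the paper's approach so much as supplied an argument where the paper gives none; what you have written is essentially the original proof from the cited reference, and your caveat about the degenerate case $v_{i-1}(\textbf{x}_0,u_0)=0$ is a reasonable extra bit of care.
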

\subsection{Probabilistic Problem Formulation}
Motion uncertainty is modelled by introducing a normal random vector into the affine control system as in \cite{lyu2021probabilistic} 
\vspace{-.35cm}
\begin{align}g_s(\textbf{x}) = \begin{bmatrix}g(\textbf{x})&I_{n\times n}\end{bmatrix}, u_s = \begin{bmatrix}u\\\epsilon_{n \times 1}\end{bmatrix} \end{align}
The stochastic controller takes the form
	\begin{align}\dot{\textbf{x}}_P = f(\textbf{x}_P)+g_s(\textbf{x}_P)u_s = f(\textbf{x}_P)+g(\textbf{x}_P)u + \epsilon\label{stoch_affine}\end{align}
Where $\epsilon$ is a normal random vector in $\mathbb{R}^n$ which introduces uncertainty into the state vector, $\textbf{x}_P$ . The first and third constraints from lemma $1$ will now contain random variables. Given this, the new constraints must be satisfied probabilistically. Consider the constraints (\ref{pecbf}) and (\ref{pKa}). Let the inputs that satisfy (\ref{pecbf}) and (\ref{pKa}) be $\mathcal{B}^s$ as in \cite{luo2020multi}, then, by the same proof (\ref{pecbf}) and (\ref{pKa}) ensure the satisfaction of the corresponding constraints from lemma 1 in a stochastic system with confidence $\eta$. 

Given these constraints we can construct a general optimization problem that ensures safety probabilistically and ensures feasibility while secondarily conforming to a desired $u$ and $K_\alpha$:
\vspace{-.2cm}
\begin{align}
\min_{\{u, K_\alpha\}}& c_1||u-u_{\text{desired}}|| + c_2||K_\alpha-K_{\alpha_{\text{desired}}}|| \label{gen_optim}\\
\text{s.t.}\quad &P\left( \sup_{u \in U} \left[ h^{(r)}(\textbf{x})\right] \geq -K_\alpha\eta_b(\textbf{x}) \right) \geq \eta \label{pecbf}\\
&p_i > 0, \;\;P\left(p_i \geq -\frac{\dot v_{i-1}(\textbf{x}, u)}{v_{i-1}(\textbf{x}, u)} \right) \geq \eta \label{pKa} 
\end{align}
Unlike the original eCBF formulation, we select $K_\alpha$ at each time step rather than only at epoch $0$. The constraints in the above optimization problem are both sufficient and necessary for probabilistic safety. Hence no safe input is excluded with these constraints, and they guarantee the feasibility of any safe input at each time step. 
 
Explicit characterization of these constraints is omitted for the sake of generality, as they are dependent upon the barrier function and system dynamics. Detailed examples are given in the case studies. However, given that uncertainty is added as random variables to the existing variables in the dynamics that are not functions of $\textbf{x}$, we can characterize what the probabilistic constraints will be relative to the deterministic constraints.

\begin{theorem}{(Stochastic System Transition)}
For a deterministic system using eCBFs as described in \cite{ames2019control}, suppose the resulting constraints over the admissible control space (first constraint from lemma $1$) and $K_\alpha$ (third constraint from lemma $1$) take the form $f_{\text{ad}}(\dot{\textbf{x}}, u, K_\alpha, C) \geq 0$ and $f_{p_i}(\dot{\textbf{x}}, u, K_\alpha, C) \geq 0$ where $i \in \mathbb{N} \cap [1, r]$ and $C$ contains $\textbf{x}, \ddot{\textbf{x}}, \dots$. By introducing uncertainty as described above, the corresponding constraints (\ref{pecbf}) and (\ref{pKa}) for the stochastic system will be $P\left(f_{\text{ad}}(\dot{\textbf{x}} + \epsilon, u, K_\alpha, C) \geq 0 \right) \geq \eta$ and $P\left(f_{p_i}(\dot{\textbf{x}} + \epsilon, u, K_\alpha, C) \geq 0\right) \geq \eta$ where $i \in \mathbb{N} \cap [1, r]$. The transition from deterministic to stochastic systems is equivalent to simply adding $\epsilon$ to the resulting constraints after evaluation while preserving probabilistic safety. 
\end{theorem}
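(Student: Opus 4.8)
The plan is to treat the result as a structural statement about how the injected noise propagates through the \emph{algebraic form} of the eCBF constraints, rather than re-deriving those constraints from scratch. First I would note that the deterministic eCBF conditions of Lemma~1 are obtained purely by algebraic manipulation of the affine dynamics $\dot{\textbf{x}} = f(\textbf{x}) + g(\textbf{x})u$; nowhere does that derivation use that the system is deterministic. Consequently the same manipulation applies verbatim to the stochastic system (\ref{stoch_affine}), which is itself an affine control system with drift $f$, augmented input matrix $g_s$, and augmented input $u_s$, so that $\dot{\textbf{x}}_P = f(\textbf{x}_P) + g(\textbf{x}_P)u + \epsilon$.

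The key step is to compare the two systems argument-by-argument inside the constraint functions $f_{\text{ad}}$ and $f_{p_i}$. By hypothesis these functions collect all dependence on the realized state-derivative into the single slot $\dot{\textbf{x}}$, while every remaining quantity---the chosen input $u$, the chosen gains $K_\alpha$, and the terms $\textbf{x}, \ddot{\textbf{x}}, \dots$ gathered in $C$---is either a decision variable or a deterministic function of the state measured at the current epoch. Since the noise $\epsilon$ is injected additively and only at the level of the state-derivative (it is added precisely to the dynamics variables that are not functions of $\textbf{x}$), the sole effect of passing from the deterministic to the stochastic system is to replace the realized derivative $\dot{\textbf{x}}$ by $\dot{\textbf{x}} + \epsilon$, leaving $u$, $K_\alpha$, and $C$ untouched. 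Evaluating $f_{\text{ad}}$ and $f_{p_i}$ on the stochastic system then yields exactly $f_{\text{ad}}(\dot{\textbf{x}} + \epsilon, u, K_\alpha, C)$ and $f_{p_i}(\dot{\textbf{x}} + \epsilon, u, K_\alpha, C)$.

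Because $\epsilon$ is a random vector, these expressions are now random inequalities, so I would impose them in chance-constrained form, $P(f_{\text{ad}}(\dot{\textbf{x}} + \epsilon, u, K_\alpha, C) \geq 0) \geq \eta$ and $P(f_{p_i}(\dot{\textbf{x}} + \epsilon, u, K_\alpha, C) \geq 0) \geq \eta$, which are precisely (\ref{pecbf}) and (\ref{pKa}). To close the argument I would invoke the result recorded just before the theorem: the set $\mathcal{B}^s$ of inputs satisfying these chance constraints guarantees, with confidence $\eta$, the corresponding deterministic conditions of Lemma~1, so probabilistic safety is preserved under the substitution.

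I expect the main obstacle to be the second step---rigorously justifying that $\epsilon$ contaminates only the $\dot{\textbf{x}}$ argument and never leaks into $C$. This hinges on the modelling assumption that the higher-order terms appearing in $C$ are evaluated from the known drift and the exactly-measured current state, and therefore carry no $\epsilon$-dependence; making this watertight for general relative degree $r$ (where $\ddot{\textbf{x}}, \dots$ could in principle inherit noise) is the delicate point. I would discharge it by appealing to the single-step, derivative-level nature of the injected uncertainty rather than by a term-by-term Lie-derivative expansion, and defer the concrete propagation for specific barriers to the case studies.
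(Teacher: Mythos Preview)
Your approach is essentially the paper's: recognise that the stochastic dynamics differ from the deterministic ones only in the $\dot{\textbf{x}}$ slot of $f_{\text{ad}}$ and $f_{p_i}$, then impose the resulting random inequalities as chance constraints and invoke the $\mathcal{B}^s$ result for probabilistic safety. That skeleton is correct.

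Where you diverge from the paper is in handling what you call the ``main obstacle'' --- showing that $\epsilon$ does not leak into $C = \{\textbf{x}, \ddot{\textbf{x}}, \dots\}$. You propose to discharge this by an informal appeal to the ``single-step, derivative-level nature of the injected uncertainty'' and to defer the concrete propagation to the case studies. This is weaker than needed and leaves the general-$r$ claim unproved. The paper closes the point in one line: since $\epsilon$ is a constant random vector (not a function of $\textbf{x}$ or $t$), one has $\dot{\epsilon} = \textbf{0}$. Hence, taking $\textbf{x}_P = \textbf{x}$ at the current epoch, $\dot{\textbf{x}}_P = \dot{\textbf{x}} + \epsilon$ but $\ddot{\textbf{x}}_P = \ddot{\textbf{x}}$, and therefore $\textbf{x}_P^{(c)} = \textbf{x}^{(c)}$ for all $c \geq 2$. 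This immediately shows, for arbitrary relative degree $r$, that every entry of $C$ is $\epsilon$-free and only the $\dot{\textbf{x}}$ argument is perturbed. You should state this observation explicitly rather than deferring it; with it, your proof is complete and matches the paper's.
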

\begin{proof}
Note that $\dot{\epsilon} = \textbf{0}_{1\times n}$. If $\textbf{x} = \textbf{x}_P$, then $\dot{\textbf{x}}_P = \dot{\textbf{x}}+\epsilon$. Furthermore, $\ddot{\textbf{x}}_P = \ddot{\textbf{x}}$. It is clear that no further derivatives depend on $\epsilon$. Therefore, to calculate the constraints, $\textbf{x}$ can be used place of $\textbf{x}_P$ and $\textbf{x}^{(c)}$ for $\textbf{x}_P^{(c)}$ when $c\geq 2$, but $\dot{\textbf{x}}_P = \dot{\textbf{x}}+\epsilon$.
\end{proof}\vspace{-.1cm}
Given this, the probabilistic constraints are no harder to derive than the deterministic constraints. All that must be done is to use an inverse density function to account for the random variables.

While selection of $K_\alpha$ need only be done at the beginning of the control period in the determinstic case, we choose it at every time step to guarantee the feasibility of the eCBF, while also keeping the entries of $K_\alpha$ as close as possible to some desired values. However, if $K_\alpha$ is chosen once, it is guaranteed to remain valid only if the constraints on $u$ and $K_\alpha$ are satisfied at every epoch \cite{ames2019control}. Hence when they are violated due to the unbounded uncertainty, which we are $\eta$-confident will not happen for each constraint, $K_\alpha$ may need to be reset. Because of this, $K_\alpha$ optimization is needed for a safety guarantee. Similarly, if a deterministic barrier is used in a stochastic model while optimizing $K_\alpha$, the repulsive force of the boundary of the safe set is reduced because of the $K_\alpha$ optimization while the state stays at the equivalent of a confidence of $0.5$ because of the deterministic barrier. This would be a highly dangerous control method. It is for these reasons that probabilistic barriers and $K_\alpha$ optimization are best used together to achieve safe control. Employing both of these techniques will offer a probabilistic safety guarantee and a feasibility guarantee at each time-step.
\section{Applications
} 
\subsection{Case Study: Lane Changing}
\label{lanechanging} 
Here we take lane changing in a two-lane context using the kinematic bicycle model as an example. The kinematic bicycle model \cite{kongbicycle} can effectively be captured with the affine control system 
\begin{align}
\begin{split}
	\begin{bmatrix}
		\dot x\\
		\dot y\\
		\dot \psi\\
		\dot v\\
	\end{bmatrix}
	=
	&\begin{bmatrix}
		v\cos(\psi)\\
		v\sin(\psi)\\
		0\\
		0\\
	\end{bmatrix}
	+
\left[\begin{array}{@{}c c@{}}
  \begin{matrix}
    0&-v\sin(\psi)\\
    0&v\cos(\psi)\\
    0&v/l_r\\
    1&0
  \end{matrix}
  I_{4\times4} \\
  \end{array}\right]
	\begin{bmatrix}
		a\\
		\beta\\
		\epsilon_1\\
		\textbf{0}_{1 \times 3}
	\end{bmatrix}
	\end{split}
	\label{randbicycleaffine}
\end{align}
where $x$ and $y$ are position variables, $\psi$ is the inertial heading and $v$ is speed. $a$ is acceleration in the direction of current velocity and $\beta$ is slip angle \cite{he2021rulebased}. The small angle assumption \cite{he2021rulebased} is made to ensure the kinematic bicycle model is in the control affine form. To ensure pairwise vehicle safety, uncertainty along the direction of travel is taken into consideration.

The scenario is described in Fig \ref{diagram1}, and the goal of the ego vehicle is to maintain predefined safety margin $r$ from the surrounding vehicles in the direction of travel. The CBF that enforces pairwise vehicle safety is
$
	h_{m}(\textbf{x}) = (x_e-x_m)^2-r^2,
$
where $e$ denotes the ego vehicle, and $m$ denotes the other vehicle under consideration. Since position is considered, and acceleration is an input, this is a higher-relative-degree CBF, and it will be used as an exponential CBF.

\begin{figure}
	\centering
	\includegraphics[width=.5\linewidth]{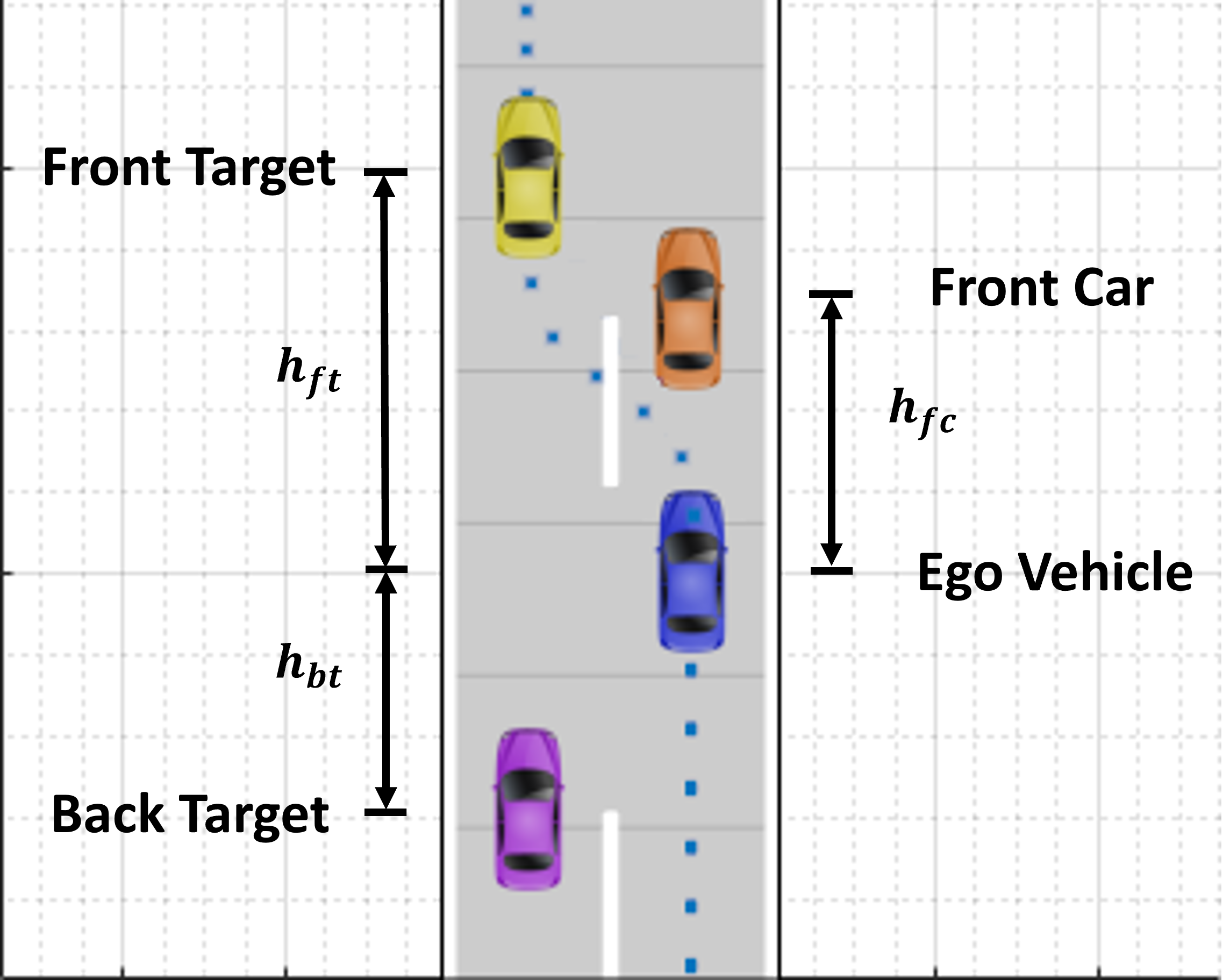}
	\caption{The lane changing scenario. The blue car is the ego vehicle, which uses a  Control Lyapunov Function-based nominal controller \cite{he2021rulebased}. $h_{fc}$,$h_{ft}$,$h_{bt}$ denote the safety function $h$ between the ego vehicle and the front car in the current lane, the front target in the target lane, and the back target in the target lane, respectively. The goal of the ego target is to follow the nominal path (in dotted line) to merge in between the front target and the back target in the adjacent lane. 
  }
	\label{diagram1}\vspace{-0.5cm}
\end{figure}

\begin{proposition}
	If $h_m$ is an exponential Control Barrier Function for the affine system (\ref{randbicycleaffine}), then the admissible control space $B(\textbf{x}, u)$\footnote{Appendix of detailed proof can be found at https://drive.google.com/file/d/1XKNQnhDRczJeXA3-XHkQgUKuHOwV4UFZ/view?usp=sharing.}
	\vspace{-0.15cm}
	 \begin{align}
		\begin{split}
			B(\textbf{x}, u) = &\left\lbrace u_e \in U_e \;:\; \frac{b_1-\bar \Delta\epsilon_1}{\sigma} \leq \Phi^{-1}(1-\eta)\right\rbrace  \cup\\ &\left\lbrace u_e \in U_e \;:\; \frac{b_2-\bar \Delta\epsilon_1}{\sigma} \geq \Phi^{-1}(\eta)\right\rbrace 
		\end{split}
	\end{align}
	where $\epsilon_{1_e} - \epsilon_{1_0} \sim \mathcal{N}(\bar{\Delta}_{\epsilon_1}, \sigma)$, $b$ is a state-dependent variable, and $\Phi^{-1}$ is the inverse standard normal cumulative distribution function (CDF), will ensure the continued safety of $h_m$ with confidence $\eta$.
\end{proposition}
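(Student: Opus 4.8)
The plan is to reduce the probabilistic claim to the deterministic eCBF condition of Lemma 1 and then inject the uncertainty through Theorem 1. First I would compute the deterministic eCBF constraint for $h_m=(x_e-x_m)^2-r^2$. Writing $\Delta x=x_e-x_m$, and noting that since acceleration is the input this barrier has relative degree $r=2$, I differentiate twice: $\dot h_m=2\Delta x\,\Delta\dot x$ and $\ddot h_m=2(\Delta\dot x)^2+2\Delta x\,\Delta\ddot x$, where $\Delta\dot x$ and $\Delta\ddot x$ follow from the bicycle dynamics (\ref{randbicycleaffine}). With $K_\alpha=[k_1,k_2]$ and $\eta_b=[h_m,\dot h_m]^{T}$, the deterministic admissible-space constraint (first item of Lemma 1) is $\ddot h_m+k_2\dot h_m+k_1 h_m\ge 0$, i.e. the concrete instance of $f_{\text{ad}}(\dot{\mathbf{x}},u,K_\alpha,C)\ge 0$.

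Next I would apply Theorem 1. Because the noise enters only the longitudinal rate $\dot x$ in (\ref{randbicycleaffine}) and $\dot\epsilon=\mathbf{0}$, the stochastic constraint is obtained by the substitution $\Delta\dot x\mapsto\Delta\dot x+\Delta\epsilon_1$, with $\Delta\epsilon_1=\epsilon_{1_e}-\epsilon_{1_0}$, while $\Delta x$ and $\Delta\ddot x$ stay unchanged. Substituting and collecting terms shows that $f_{\text{ad}}(\dot{\mathbf{x}}+\epsilon,u,K_\alpha,C)\ge 0$ is a \emph{quadratic} in $\Delta\epsilon_1$ with positive leading coefficient $2$ (from the $2(\Delta\dot x+\Delta\epsilon_1)^2$ term). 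Hence the safe region in $\Delta\epsilon_1$ is the complement of the interval between the two roots of this quadratic, namely $\{\Delta\epsilon_1\ge b_1\}\cup\{\Delta\epsilon_1\le b_2\}$, where the state- and control-dependent endpoints $b_1,b_2$ are exactly those roots.

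Then I would turn the chance constraint $P(f_{\text{ad}}(\dot{\mathbf{x}}+\epsilon,u,K_\alpha,C)\ge 0)\ge\eta$ into the stated CDF conditions. Using $\Delta\epsilon_1\sim\mathcal{N}(\bar\Delta_{\epsilon_1},\sigma)$, a sufficient condition for the union event to carry probability at least $\eta$ is that one of its two disjoint tails alone carries mass $\eta$. Standardizing, $P(\Delta\epsilon_1\ge b_1)\ge\eta$ is equivalent to $\frac{b_1-\bar\Delta_{\epsilon_1}}{\sigma}\le\Phi^{-1}(1-\eta)$, while $P(\Delta\epsilon_1\le b_2)\ge\eta$ is equivalent to $\frac{b_2-\bar\Delta_{\epsilon_1}}{\sigma}\ge\Phi^{-1}(\eta)$; the union of the corresponding control sets is precisely $B(\mathbf{x},u)$. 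Since any $u_e\in B(\mathbf{x},u)$ makes the eCBF inequality hold with confidence at least $\eta$, the forward-invariance guarantee of Lemma 1 certifies continued safety of $h_m$ with confidence $\eta$.

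I expect the main obstacle to be the quadratic, rather than linear, dependence on $\Delta\epsilon_1$: it forces the feasible set to be a union of two tail events instead of a single half-line, so I must verify the leading coefficient is positive (so the parabola opens upward and its $\ge 0$ set consists of the two outer rays), correctly identify which root is the upper and which the lower endpoint, and keep the sign conventions straight so that the upper tail produces $\Phi^{-1}(1-\eta)$ and the lower tail produces $\Phi^{-1}(\eta)$. A secondary point is justifying that replacing the exact union-probability condition by either single-tail condition is a sound (though slightly conservative) sufficient condition, which is what licenses writing $B(\mathbf{x},u)$ as a union.
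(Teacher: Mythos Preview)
Your proposal is correct and follows essentially the same approach the paper takes: compute the deterministic eCBF inequality for $h_m$ at relative degree two, invoke Theorem~1 to replace $\Delta\dot x$ by $\Delta\dot x+\Delta\epsilon_1$, observe that the resulting inequality is an upward-opening quadratic in $\Delta\epsilon_1$ whose nonnegativity region is the union of two rays, and then standardize each tail event to obtain the two $\Phi^{-1}$ conditions. Your identification of the leading coefficient, the root labeling, and the single-tail sufficiency argument are all in line with the paper's intended derivation.
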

\begin{proposition}
    Let $j = -p_1((\Delta_{x})^2-r^2) - 2\Delta_x\Delta_{\dot x}$.
	If $h_m$ is an exponential Control Barrier Function for the affine system (\ref{randbicycleaffine}), then the admissible space for valid $K_\alpha$ choice is
	\vspace{-0.15cm}
	\begin{align}
		p_1& > 0, p_2>0 \label{positive_p}\\
		&\begin{cases}
			\begin{aligned}
				&j - 2\Delta_x\bar\Delta_{\epsilon_1} - \Phi^{-1}(1-\eta)\sigma2\Delta_x \leq 0 & \Delta_x > 0\\
				&j - 2\Delta_x\bar\Delta_{\epsilon_1} - \Phi^{-1}(\eta)\sigma2\Delta_x \leq 0& \Delta_x < 0\\
				&p_1r^2 \leq 0& \Delta_x=0
			\end{aligned}
		\end{cases}\label{prob_ka1}\\
		&
		\begin{aligned}
				\left\lbrace \frac{d_1-\bar \Delta\epsilon_1}{\sigma} \leq \Phi^{-1}(1-\eta)\right\rbrace  \lor \left\lbrace \frac{d_2-\bar \Delta\epsilon_1}{\sigma} \geq \Phi^{-1}(\eta) \right\rbrace
		\label{prob_ka2} 
		\end{aligned}
	\end{align}
where $\epsilon_{1_e} - \epsilon_{1_0} \sim \mathcal{N}(\bar{\Delta}_{\epsilon_1}, \sigma)$, $\Delta_x = x_e-x_m$, $\Delta_{\dot{x}} = \dot{x}_e-\dot{x}_m$, $d$ is a state dependent variable, and $\Phi^{-1}$ is the inverse standard normal CDF.
\end{proposition}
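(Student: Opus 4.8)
The plan is to specialize the eigenvalue conditions of Lemma~1 to $h_m$, whose relative degree is $2$, and then push each resulting deterministic inequality through Theorem~1 to obtain its probabilistic counterpart. I first set $v_0 = h_m = \Delta_x^2 - r^2$, $v_1 = \dot v_0 + p_1 v_0 = 2\Delta_x\Delta_{\dot x} + p_1(\Delta_x^2 - r^2)$, and $v_2 = \dot v_1 + p_2 v_1$, where $p_1,p_2$ are the eigenvalues of $F - G K_\alpha$. The positivity requirement $p_1 > 0,\, p_2 > 0$ in (\ref{positive_p}) is inherited verbatim from the first eigenvalue condition of Lemma~1, since the eigenvalues are design parameters and carry no noise; this branch needs no probabilistic treatment.

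For the $p_1$ branch I observe that, with $v_0 > 0$, the deterministic condition $p_1 \geq -\dot v_0/v_0$ is exactly $v_1 \geq 0$, and that the quantity $j$ in the statement satisfies $j = -v_1$. By Theorem~1 the stochastic transition replaces $\dot{\textbf{x}}$ by $\dot{\textbf{x}} + \epsilon$, so $\Delta_{\dot x}$ becomes $\Delta_{\dot x} + \Delta\epsilon_1$ while $\Delta_x$ is untouched; hence the stochastic $v_1$ is affine in the Gaussian noise with slope $2\Delta_x$. Imposing $P(v_1 \geq 0) \geq \eta$, standardizing with $\Delta\epsilon_1 \sim \mathcal{N}(\bar\Delta_{\epsilon_1}, \sigma)$, and dividing through by $2\Delta_x$ gives a single inverse-CDF inequality whose direction flips with the sign of $\Delta_x$. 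Splitting into $\Delta_x > 0$, $\Delta_x < 0$, and the degenerate $\Delta_x = 0$ (where the noise term vanishes and $v_1 = -p_1 r^2$) recovers the three branches of (\ref{prob_ka1}).

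The $p_2$ branch is the crux and the step I expect to be the main obstacle. Here $p_2 \geq -\dot v_1/v_1$ is equivalent to $v_2 \geq 0$ when $v_1 > 0$, but computing $\dot v_1 = 2\Delta_{\dot x}^2 + 2\Delta_x\Delta_{\ddot x} + 2p_1\Delta_x\Delta_{\dot x}$ shows that the substitution $\Delta_{\dot x} \mapsto \Delta_{\dot x} + \Delta\epsilon_1$ (with $\Delta_{\ddot x}$ left noise-free per Theorem~1) makes $v_2$ \emph{quadratic} in $\Delta\epsilon_1$, with a fixed positive leading coefficient of $2$ from the $2\Delta_{\dot x}^2$ term. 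Thus $P(v_2 \geq 0) \geq \eta$ is a chance constraint over a squared Gaussian rather than a plain tail bound. I plan to view $v_2 \geq 0$ as an upward parabola in $\Delta\epsilon_1$, solve for its roots $d_1 \leq d_2$ by the quadratic formula, and note that the feasible noise set is the exterior $\{\Delta\epsilon_1 \leq d_1\} \cup \{\Delta\epsilon_1 \geq d_2\}$. Because the two tails are disjoint, a sufficient route to $\eta$ confidence is to place enough mass in a single tail; translating each one-sided condition $P(\Delta\epsilon_1 \leq d) \geq \eta$ and $P(\Delta\epsilon_1 \geq d) \geq \eta$ through the standard normal inverse CDF yields the disjunction in (\ref{prob_ka2}). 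The delicate bookkeeping lies in fixing the root ordering and the accompanying sign of the inequalities so the two branches align with $\Phi^{-1}(1-\eta)$ and $\Phi^{-1}(\eta)$ respectively, and in verifying that conditioning on $v_1 > 0$ stays consistent with the $p_1$ branch throughout.
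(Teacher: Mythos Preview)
Your plan is correct and follows the route the paper sets up: specialize Lemma~1's eigenvalue conditions to the relative-degree-$2$ barrier $h_m$, push the noise through via Theorem~1 so that only $\Delta_{\dot x}$ picks up $\Delta\epsilon_1$, reduce the $p_1$ branch to an affine Gaussian tail bound split on $\mathrm{sgn}(\Delta_x)$, and handle the $p_2$ branch by solving the resulting upward-opening quadratic in $\Delta\epsilon_1$ and imposing a single-tail sufficient condition on each root. This is exactly the machinery signaled by the paper's Theorem~1 together with the parallel structure of Propositions~1 and~2 (the $b_1,b_2$ and $d_1,d_2$ pairs with $\Phi^{-1}(1-\eta)$ and $\Phi^{-1}(\eta)$), so there is no substantive difference in approach; the only bookkeeping to watch, which you already flag, is that the paper's labeling of $d_1,d_2$ need not coincide with your convention $d_1\le d_2$, so the two disjuncts in (\ref{prob_ka2}) may appear swapped relative to your derivation.
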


To evaluate the validity and effectiveness of our method, four baseline methods are chosen for comparison: \textbf{(Baseline 1)} a traditional first-degree CBF from \cite{he2021rulebased} that uses explicit physics calculations to account for acceleration, \textbf{(Baseline 2)} the proposed method without probabilistic extension and $K_\alpha$ optimization, \textbf{(Baseline 3)} the proposed method without probabilistic extension but with $K_\alpha$ optimization, \textbf{(Baseline 4)} the proposed method with probabilistic extension but without $K_\alpha$ optimization.

The task performance in terms of collision rate and feasibility satisfaction is compared among the proposed method and the baseline methods. All the methods share the same designs in nominal controller and vehicle dynamic model, which ensures the fairness of comparison. All surrounding vehicles move at a randomly generated constant speed.

 The results on 250 randomly generated lane changing scenarios are shown in Fig. \ref{experiment1}. The starting positions and velocities of each vehicle are generated with noise from a uniform distribution. In all scenarios the initial position of the ego vehicle is always set behind the front car and the front target and in front of the back target, as illustrated in Fig. \ref{diagram1}. It is observed that the proposed method achieved the best overall performance with lowest collision and infeasibility rate and the highest successful rate with no unfinished cases. Simulation is done in Matlab, and code is available on Github.\savefootnote{github}{https://github.com/SpencerKoevering/eCBFKaOptimization} 

It is observed that baselines 1, 2, and 4 all result in a higher infeasibility rate and more unfinished cases compared to the proposed method due to the lack of $K_\alpha$ optimization, which supports our theoretical claim that $K_\alpha$ optimization can expand the solution space whenever possible. Both the proposed method and baseline 3 employ $K_\alpha$ optimization, and the only difference is that the proposed method uses the probabilistic extension to take uncertainty into consideration while baseline 3 does not. Therefore, the proposed method is safer in terms of lower collision rate. In addition, our method has greater generality and robustness than baseline 1, as our method is not dependent upon additional physics assumptions.
\begin{figure}
	\centering
	\includegraphics[width=1\linewidth]{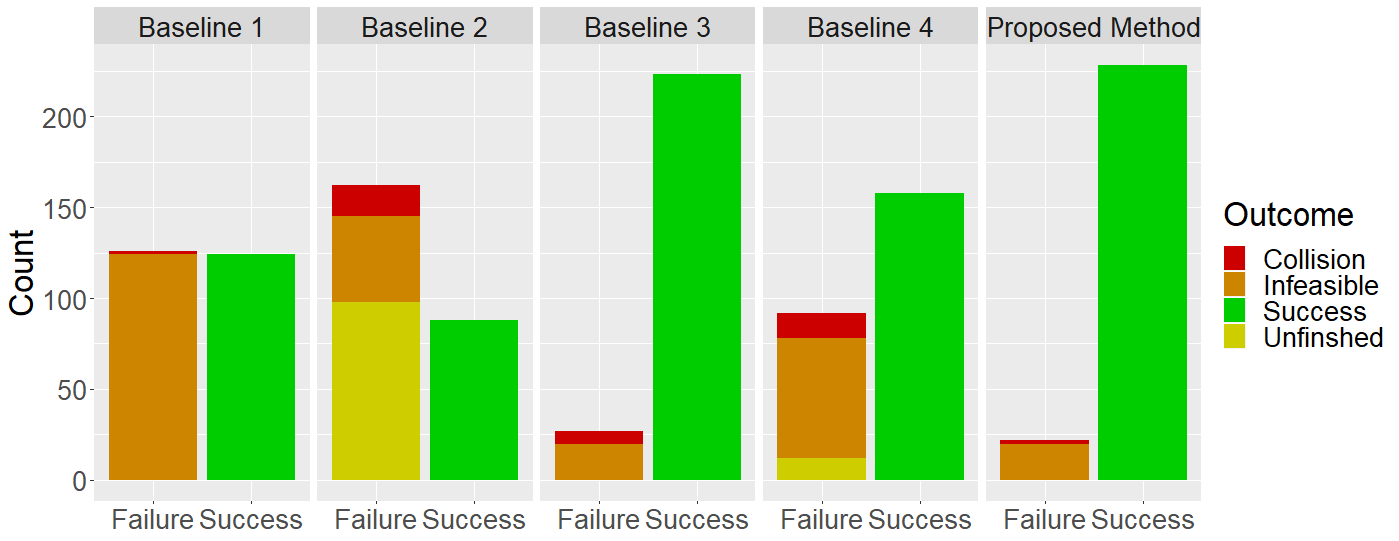}
	\caption{Comparison of the proposed methods with 4 baseline methods. Here $\epsilon ~ \mathcal{N}(0, 0.15)$. The confidence level $\eta$ is set to be $0.99$ for the probabilistic extension. An infeasible outcome indicates that the controller could not find a safe input, this often leads to collision in the next time step if a backup controller is used. An unfinished outcome means that the ego vehicle was unable to change lanes, but never collided or became infeasible. A successful outcome indicates that the ego vehicle changed lanes without colliding or becoming infeasible.
	}
	\label{experiment1}\vspace{-0.5cm}
\end{figure}
\subsection{Case Study: Intersections}

The second case study presented uses eCBFs to ensure safe traversal of an uncontrolled intersection in multiple scenarios. Similar to section \ref{lanechanging}, the kinematic bicycle model is used to describe vehicle dynamics, though uncertainty is considered in both the $x$ and $y$ directions in this case, which allows for higher-dimensional modelling.

The motivation for including this case study is that: 1) In this case, unlike section  \ref{lanechanging}, the uncertainty modeling is high-dimensional, which is a more realistic general setup. 2) To handle the high-dimensional modelled uncertainty, a principled theoretical derivation of the resulting safety constraints is provided while a practical means of approximating the uncertainty is also provided to ensure computational efficiency.
\vspace{-0.15cm}
\begin{align}
\begin{split}
		\begin{bmatrix}
			\dot x\\
			\dot y\\
			\dot \psi\\
			\dot v\\
		\end{bmatrix}
		=
		&\begin{bmatrix}
			v\cos(\psi)\\
			v\sin(\psi)\\
			0\\
			0\\
		\end{bmatrix}
		+
		\left[\begin{array}{@{}c c@{}}
          \begin{matrix}
            0&-v\sin(\psi)\\
            0&v\cos(\psi)\\
            0&v/l_r\\
            1&0
          \end{matrix}
          I_{4\times4} \\
         \end{array}\right]
		\begin{bmatrix}
			a\\
			\beta\\
			\epsilon_1\\
			\epsilon_2\\
		\textbf{0}_{1 \times 2}
		\end{bmatrix}
		\end{split}
		\label{randbicycleaffineintersection}
\end{align}

In this case, a different CBF $h$ is needed. The main reason is that the analogous CBF to the one used in section \ref{lanechanging}: $h_o(x) = (x_e-x_o)^2+(y_e-y_o)^2+r^2$, results in a quadratic form of $\epsilon$ in the constraint. However, a quadratic form of two or more normal random variables cannot be solved with the quadratic formula and an inverse normal distribution.
A generalized chi square inverse CDF could be approximated to satisfy these constraints\cite{das2021method}.  However, this approximation is not computationally efficient enough to be deployed in real-time applications. Instead, we approximate the $2$-norm with a $1$-norm, to prevent the quadratic form of $\epsilon$. Using the $1$-norm, this constraint can be satisfied using the algebraic properties of the normal distribution. 

As a result, the CBF used is
 \begin{align}
 \begin{split}
		h_o(x) =& \left|x_e-x_o\right| - b_{\{x,e\}} - b_{\{x,o\} } + \left|y_e-y_o\right|\\ &- b_{\{y,e\}} - b_{\{y,o\}}-r \label{2dh}
\end{split}
\end{align}
where $e$ denotes the ego vehicle and $o$ denotes the other vehicle under consideration. $x$ and $y$ represent position. To offset the precision loss from a $1$-norm, bounding boxes are used to calculate minimum distance. Assuming that the vehicles are rectangular, then for each vehicle there is some value $b_x$ which is the distance to the farthest point on the edge of the vehicle from the center of mass in the $\pm x$-direction. Similarly, $b_y$ is this greatest distance in the $\pm y$-direction. $r$ represents any extra safety distance that is desired.
\begin{figure}
	\centering
    \includegraphics[width=.45\linewidth]{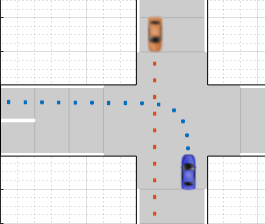}
 	\includegraphics[width=.45\linewidth]{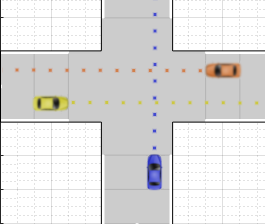}
	\caption{Intersection experiment setup. Scenario 1 (left) and scenario 2 (right). Dotted lines indicate paths. The ego vehicle is blue.}
	\label{diagram3}\vspace{-0.5cm}
\end{figure}

A nonlinear MPC-based nominal controller \cite{nlmpc} is used to calculate $u_{\text{desired}}$ for all vehicles.  Two intersection scenarios are considered, as shown in Fig. \ref{diagram3}, where the ego vehicle attempts to navigate an uncontrolled intersection of two-lane roads. In the first scenario the ego vehicle attempts a left turn while there is an oncoming vehicle, and in the second the ego vehicle attempts to travel straight through the intersection while there are crossing vehicles, one from each direction. The other vehicles do not attempt to avoid collisions. Simulation is done in Matlab, and code is available on Github.\repeatfootnote{github}

\begin{figure}
	\centering
    \includegraphics[width=1\linewidth]{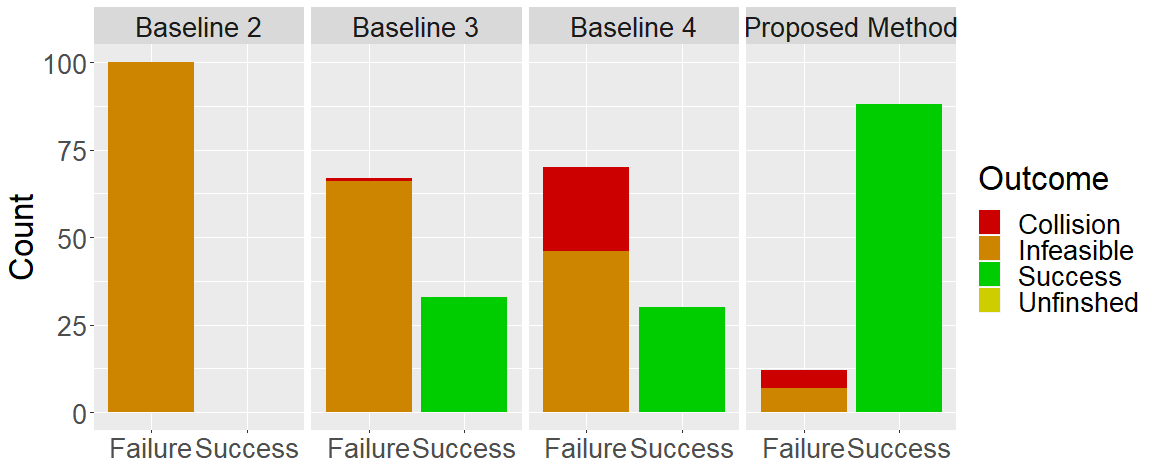}
    \includegraphics[width=1\linewidth]{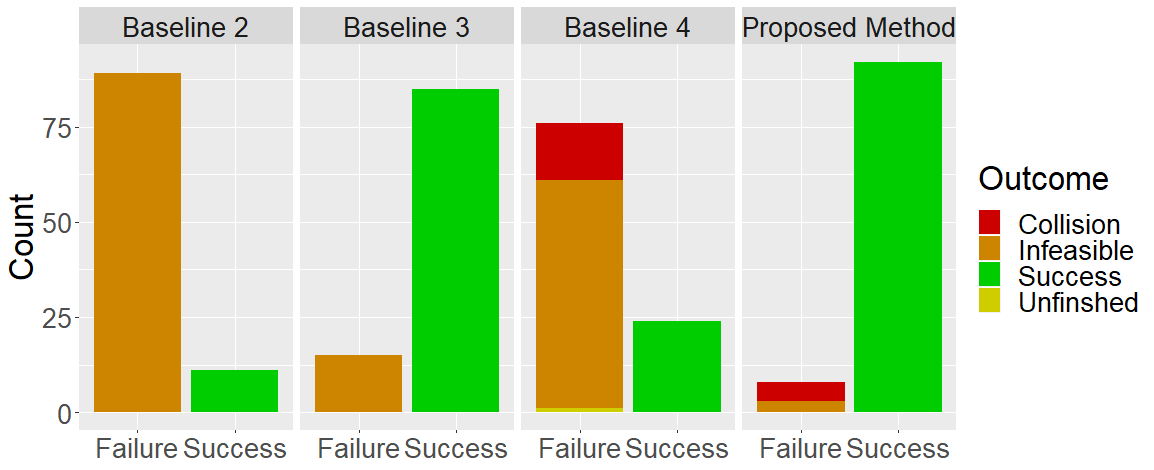}
	\caption{Comparison of the proposed methods with 3 baseline methods in two intersection scenarios. 
	Here $\epsilon ~ \mathcal{N}(0, 0.15)$. The confidence level is $\eta=0.9999$. Weights in the objective function for the optimization problem are adjusted for each scenario. In case 2 the nlmpc avoids being between the other two vehicles. }
	\label{experiment3o}\vspace{-0.5cm}
\end{figure}

Similar to the last application, the proposed method achieves the best overall performance in terms of the highest success rate in both of the scenarios, compared to the baseline methods. We also notice some interesting comparisons among the baseline methods. 
The authors argue that it is important to equip the system with both  probabilistic extension and $K_\alpha$ optimization at the same time, especially in environments with higher-dimensional uncertainty.
In the first scenario, more collisions with baseline 3 are observed than baseline 2, which indicates a negative impact of $K_\alpha$ optimization on safety in the deterministic controller. Although baseline 4 is equipped with probabilistic extension to account for uncertainty, without $K_\alpha$ optimization, it results in a higher collision rate than any other controller.
In the second scenario, some similar observations are obtained. It is hard to tell whether the issue of collision or infeasibility is more fatal in the real world, since in most cases, once infeasibility arises, no safe guidance can be provided to the system. The deterministic barriers did better in this scenario, likely due to the fact that in this case they are not required to get close to the other vehicles due to perpendicular travel. All these observations support the argument that both the probabilistic extension and $K_\alpha$ optimization should be deployed at the same time in higher-dimensional environments, yet the deeper reasons behind these observations remain for future study.\vspace{-0.1cm}

\section{CONCLUSION}
We present a novel safe control method for autonomous vehicles with higher-relative degree dynamics under unbounded uncertainty. Our method provides provable probabilistic guarantees for safety and feasibility. Simulation results demonstrate the effectiveness of our method compared to other existing methods. Future work includes extending experiments on other robotic platforms with higher-relative-degree dynamics.

\nocite{*}  
\bibliographystyle{IEEEtran}
\bibliography{./IEEEfull,refs}

\end{document}